\newcommand\xuworries[1]{\textcolor{black}{#1}}
\newcommand{\bbS}{\mathbb{S}}
\newcommand{\bx}{\mathbf{x}}
\newcommand{\cX}{\mathcal{X}}
\newcommand{\bp}{\mathbf{p}}
\newcommand{\bbR}{\mathbb{R}}
\newcommand{\bbE}{\mathbb{E}}
\newtheorem{prop}{Proposition}
\begin{document}

\title{Learning Spread-out Local Feature Descriptors}

\author{Xu Zhang$^1$, Felix X. Yu$^2$, Sanjiv Kumar$^2$, Shih-Fu Chang$^1$ \\
$^1$Columbia University, $^2$ Google Research\\
{\tt\small\{xu.zhang, sc250\}@columbia.edu, \{felixyu, sanjivk\}@google.com}}

\maketitle



\begin{abstract}
We propose a simple, yet powerful regularization technique that can be used to significantly improve both the pairwise and triplet losses in learning local feature descriptors.
The idea is that in order to fully utilize the expressive power of the descriptor space, good local feature descriptors should be sufficiently ``spread-out'' over the space.
In this work, we propose a regularization term to maximize the spread in feature descriptor inspired by the property of uniform distribution. 
We show that the proposed regularization with triplet loss outperforms existing Euclidean distance based descriptor learning techniques by a large margin. 
As an extension, the proposed regularization technique can also be used to improve image-level deep feature embedding.
\end{abstract}

\section{Introduction}
Computing image patch correspondences based on local descriptor matching is important in many computer vision problems such as image retrieval, wide baseline stereo matching and panorama building. 
The main challenge of finding correct correspondences is that the appearance of the image patches varies due to changes of scaling, view angle, illumination and imaging condition etc. 
Designing local feature descriptors that are invariant to such changes is therefore essential. Efforts of local descriptor fall into two categories: hand-crafted and learning-based.
Hand-crafted descriptors try to achieve the invariance by manually selected rules. 
One of the most popular hand-crafted descriptors is SIFT~\cite{lowe_distinctive_2004} and its variants ~\cite{bay2006surf,tola2010daisy}, which are widely used in the computer vision community. 
The main issue of the hand-crafted descriptors is that they can only consider a limited predefined set of variations.

One approach to take all variations into consideration is learning local descriptors from a large patch correspondence dataset~\cite{brown_discriminative_2011,simonyan_learning_2014}. 
The state-of-the-art descriptor learning methods are based on neural networks \cite{balntas_learning_2016,g_learning_2015,simo-serra_discriminative_2015,yi_lift:_2016}. 
In addition to the model itself, the most important aspect of learning-based method is the loss function which defines the goal of descriptor learning: matching patches 
should be close in the descriptor space, while the non-matching patches should be far-away\footnote{We use Euclidean distance in this paper. See Section \ref{sec:back} for more details, and Section \ref{sec:method} for the discussion on alternatives.}. 
The pairwise loss and triplet loss (Section \ref{sec:back}) are the commonly used loss functions to achieve the desired properties. 
Recently, there are a lot of works such as smart sampling strategies~\cite{balntas_learning_2016,mishchuk2017working} and structured loss~\cite{tian2017l2} that improve the triplet loss. 
In particular, Kumar \etal~\cite{g_learning_2015} propose to use a global loss to separate the distance distributions of the matching pairs and non-matching pairs. This approach avoids the design of complicated sampling strategies and is also shown to provide results that are robust to training with outliers. 

The success of global loss motivates us to further explore the desired properties of the descriptor space and design a robust regularization term based on these properties.
Our main idea is that the good local feature descriptors should be sufficiently ``spread-out'' in the descriptor space in order to fully utilize the expressive power of the space. 
Specifically, we introduce a regularization term that induces the spread-out condition, inspired by the properties of the uniform distribution on unit sphere (Section \ref{sec:method}). 
The regularization can be easily used to improve all methods where pairwise or triplet loss is used.
We show that the proposed regularization with triplet loss, without hard sample mining, outperforms all the Euclidean distance based descriptors by a large margin (Section \ref{sec:local}). In particular, it outperforms the global loss~\cite{g_learning_2015} in the patch pair classification task. 
As an extension of descriptor learning, we show that the proposed regularization can also be used in improving image-level deep feature embedding (Section \ref{sec:image}).

\section{Background}
\label{sec:back}
We begin by reviewing some commonly used loss functions in learning local feature descriptors. 
Let $\cX = \{\bx_1, \ldots, \bx_N\}, ~ \bx_i \in \bbR^{m \times n}$ denote a set of $N$ training patches with $ m\times n$ pixels. $\{y_{ij}, 1 \leq i, j \leq N\}$ is a set of pairwise labels for $\cX$ indicating whether $\bx_i$ and $\bx_j$ belong to the same class ($y_{ij} = 1$) or not ($y_{ij} = 0$). 
In this paper, we call the pairs with $y_{ij} = 1$ the matching pairs, and the pairs with $y_{ij} = 0$ the non-matching pairs.
The goal of descriptor learning is to learn a feature embedding $f(\cdot): \bbR^{m \times n} \mapsto \bbR^d$ that maps raw patch pixels to a $d$ dimensional vector, such that $\parallel f(\bx_i) - f(\bx_j) \parallel_2$ is small when $y_{ij} = 1$ and $\parallel f(\bx_i) - f(\bx_j) \parallel_2$  is large when $y_{ij} = 0$. In this paper, we assume that $f(\cdot)$ lives on the unit sphere, i.e., $\parallel f(\bx) \parallel_2 = 1$, $\forall \bx \in \bbR^{m \times n}$.

\subsection{Pairwise loss}
\label{subsec:pair}
The pairwise loss tries to directly induce small distance for matching pairs and large distance for non-matching pairs.
An input for pairwise loss is of the form $(\bx_i,\bx_j,y_{ij})$, consisting of a pair of samples and their corresponding label. 
The most widely used pairwise loss is the contrastive loss:
\begin{equation}
\setlength\abovedisplayskip{5pt}
\setlength\belowdisplayskip{5pt}
\begin{aligned}
	\ell_{\text{con}} & = y_{ij}\max(0,\parallel f(\bx_i)-f(\bx_j)\parallel_2-\epsilon^+)\\
	& + (1-y_{ij})\max(0,\epsilon^- - \parallel f(\bx_i)-f(\bx_j)\parallel_2),
\end{aligned}
\label{eq:pairwise}
\end{equation}
where $f(\cdot)$ is the feature embedding. $\epsilon^+$ and $\epsilon^-$ control the margins of the matching and non-matching pairs respectively.
%
Contrastive loss was originally proposed in \cite{chopra2005learning} with $\epsilon^+ = 0$. As shown in \cite{lin2015deephash}, this often leads to overfitting. And a proper relaxed margin ($\epsilon^+>0$) can achieve better performance. The main problem with the pairwise loss is that the margin parameters are often difficult to choose~\cite{ustinova_learning_2016}.

\subsection{Triplet loss}
\label{subsec:triplet}
Triplet loss takes a triplet of samples as input. One triplet consists of three samples: $(\bx_i,\bx_j,\bx_k)$, with $y_{ij} = 1$ and $y_{ik} = 0$. 
To simplify the notation, we denote one triplet as $(\bx_i,\bx^+_i,\bx^-_i)$, where $\bx^+_i = \bx_j$ and $\bx^-_i = \bx_k$.
One commonly used triplet loss is the ranking loss~\cite{schultz2003learning}:
\begin{equation}
\setlength\abovedisplayskip{5pt}
\setlength\belowdisplayskip{5pt}
\begin{aligned}
	\ell_{\text{tri}} & =\max\Big(0, \epsilon - (\parallel f(\bx_i)-f(\bx^-_i) \parallel_2 \\
	& - \parallel f(\bx_i)-f(\bx^+_i)\parallel_2)\Big)
\end{aligned}
\label{eq:triplet}
\end{equation}
where $\epsilon$ is a margin. 
The idea of ranking loss is to separate the matching sample and the non-matching sample by at least a margin $\epsilon$. 
The main difference between pairwise loss and triplet loss is that pairwise loss considers the absolute distances of the matching pairs and non-matching pairs, while triplet loss considers the relative difference of the distances between matching and non-matching pairs.
Since the quality of the embeddings largely depends on the relative ordering of the matching pairs and non-matching pairs, triplet loss shows better performance than pairwise loss in local descriptor learning~\cite{balntas_learning_2016,g_learning_2015}.

\subsection{Improvements}
\label{subsec:improve}
The main issue of triplet loss and pairwise loss is that as the number of training samples grows, sampling all the possible triplets/pairs becomes infeasible, and only a relatively small portion of triplets/pairs can be used in training. As observed in practice, the training is often ineffective since many of the sampled triplets/pairs will satisfy the constraint within just a few training steps. One possible solution is to remove the ``easy samples'' and add new ``hard samples'' to the training set. However, determining which samples to remove or add is a challenging task~\cite{schroff2015facenet,simo-serra_discriminative_2015}. Additionally, focusing only on the samples that violate the training constraints the most will lead to overfitting~\cite{schroff2015facenet}. 

Balntas \etal~\cite{balntas_learning_2016} propose an improved version of triplet loss by applying in-triplet hard negative mining. 
The idea is that one triplet contains two non-matching pairs $(\bx_i,\bx^-_i)$ and $(\bx^+_i,\bx^-_i)$, and choosing the one that violates the triplet constraint more will make training more effective. They call this technique ``anchor swap''.

Kumar \etal~\cite{g_learning_2015} propose a global loss and combine it with the traditional triplet loss to address the sampling issue in pairwise and triplet loss.
Instead of considering sample pair or triplet, the global loss considers all matching and non-matching pairs in one training batch, and calculate the empirical mean and variance of the distance of the matching and non-matching pairs. The main idea of the global loss is to separate two empirical means by a margin and minimize the variances. 
There are two drawbacks of this method. First, the distribution of the distance of the matching pairs can vary greatly across different classes, and using a batch of randomly sampled matching pairs to estimate that distribution is unstable. Second, the extra margin in global loss adds extra complexity for training. 

\xuworries{Structured loss \cite{mishchuk2017working,movshovitz-attias_no_2017,oh2016deep,sohn_improved_2016,tian2017l2} considers all the possible matching and non-matching pairs in one batch of samples. By carefully designing the loss functions, structured loss has the ability to focus on the ``hard'' pairs in training. Song \etal \cite{oh2016deep} propose the lifted structured similarity softmax loss (LSSS). 
%
N-pair loss~\cite{sohn_improved_2016} further develops the idea by using a more effective batch construction method.} 

\xuworries{The motivation of this paper is that good descriptors should fully utilize the expressive power of the whole space (``spread-out'' in the descriptor space). We also propose a simple regularization term, global orthogonal regularization, to encourage the ``spread-out'' property. 
The global orthogonal regularization can be easily incorporated into other losses. Experiments show that the proposed regularization can improve the performance of different types of losses, especially those originally without the ``spread-out'' property.}

\section{Methodology}
\label{sec:method}
\subsection{``Spread-out'' local descriptors}
The main idea of this paper is that in order for the descriptors to fully utilize the descriptor space, it should be sufficiently ``spread-out'' in the descriptor space.
On the contrary, suppose there is part of the space where no feature descriptor appears, the learned feature descriptor is not fully utilizing the expression power of the space. 

One intuitive way to characterize ``spread-out'' is that: 
\emph{Given a dataset, we say that the learned descriptors are spread-out if two randomly sampled non-matching descriptors are close to orthogonal with a high probability.}
As an obvious example, we notice that uniform distribution has such property.
\begin{figure}[t]
	\begin{center}
	\includegraphics[width=0.5\linewidth]{./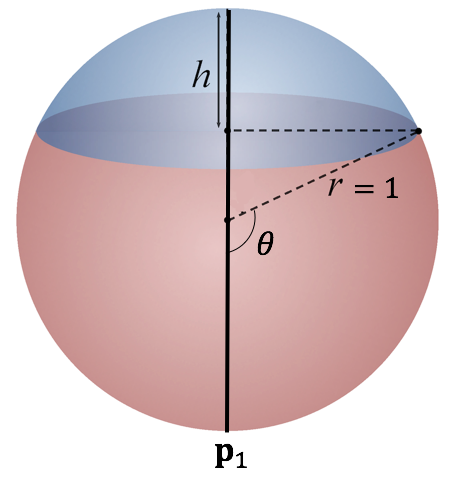}
	\end{center}
	\caption{Valid area for $\bp_1^T\bp_2 \leq s=\cos\theta$. If $\bp_2$ is on the blue spherical cap, $\bp_1^T\bp_2 \leq \cos\theta$, otherwise, $\bp_1^T\bp_2 > \cos\theta$}
	\label{fig:valid_area}
	\vspace{-0.5em}
\end{figure}
\begin{prop}
	Let $\bp_1, \bp_2 \in \bbS^{d-1}$ be two points independently and uniformly sampled from the unit sphere in $d$-dimensional space. Each point is represented by a $d$-dimensional $\ell_2$ normalized vector. 
	The probability density of $\bp_1^T\bp_2$ satisfies 
	\begin{equation*}
	p(\bp_1^T\bp_2 = s) = 
	\left\{\begin{aligned}
	& \frac{(1-s^2)^{\frac{d-1}{2}-1}}{B(\frac{d-1}{2},\frac{1}{2})} && -1 \leq s \leq 1\\
	& 0 && \text{otherwise, }\\
	\end{aligned}
	\right.
	\label{eq:probability_density}
	\end{equation*}
	%
    where $B(a,b)$ is the beta function. 
	\label{prop:probability}
\end{prop}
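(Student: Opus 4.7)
The natural strategy is to exploit the rotational symmetry of the uniform measure on $\bbS^{d-1}$. Since $\bp_1$ and $\bp_2$ are independent and uniform, I can condition on $\bp_1$ and then, by rotational invariance of the uniform measure on the sphere, assume without loss of generality that $\bp_1 = (1, 0, \ldots, 0)$. Under this reduction, $\bp_1^T \bp_2$ equals the first coordinate of $\bp_2$, so the problem becomes: compute the marginal density of the first coordinate of a uniform point on $\bbS^{d-1}$.

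The next step is to parametrize $\bp_2$ in a way that separates this first coordinate cleanly. I would write $\bp_2 = (\cos\phi, \, \sin\phi \cdot \hat{u})$ with $\phi \in [0,\pi]$ the polar angle from $\bp_1$ and $\hat{u} \in \bbS^{d-2}$ a direction on the orthogonal sub-sphere, so that $s = \bp_1^T \bp_2 = \cos\phi$. In these coordinates, the standard surface-area element on $\bbS^{d-1}$ factorizes as $dA = \sin^{d-2}\phi \, d\phi \, d\sigma_{d-2}(\hat u)$, where $d\sigma_{d-2}$ is the uniform measure on $\bbS^{d-2}$. Integrating out $\hat u$ shows that $\phi$ has density proportional to $\sin^{d-2}\phi$ on $[0,\pi]$.

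Now I change variables from $\phi$ to $s = \cos\phi$. Since $ds = -\sin\phi \, d\phi$, the factor of $\sin\phi$ in the Jacobian cancels one power of $\sin\phi$, leaving a density proportional to $\sin^{d-3}\phi = (1-s^2)^{(d-3)/2}$ on $[-1, 1]$, which is the functional form claimed.

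The last and only slightly nontrivial step is to identify the normalizing constant. I need to show
\begin{equation*}
\int_{-1}^{1} (1-s^2)^{\frac{d-3}{2}} \, ds = B\!\left(\tfrac{d-1}{2}, \tfrac{1}{2}\right).
\end{equation*}
The substitution $t = s^2$ (on $[0,1]$, doubled by symmetry) converts the integral into $\int_0^1 t^{-1/2} (1-t)^{(d-3)/2} \, dt$, which is precisely $B((d-1)/2, 1/2)$ by definition. The main place requiring care is this normalization computation and the bookkeeping of the Jacobian; the conceptual content (symmetry reduction plus polar decomposition of surface area) is standard and straightforward.
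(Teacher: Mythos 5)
Your proof is correct, but it takes a genuinely different route from the paper's. The paper computes the cumulative distribution function $P(\bp_1^T\bp_2 \leq s)$ by identifying the event with a spherical cap and then quoting the known closed-form expression for the cap's surface area in terms of the regularized incomplete beta function $I_x\bigl(\tfrac{d-1}{2},\tfrac{1}{2}\bigr)$; the density is then obtained by differentiating (and the paper only writes out the case $-1 \leq s < 0$, leaving the other half to symmetry). You instead compute the density directly: after the same symmetry reduction to the first coordinate of $\bp_2$, you factor the surface measure on $\bbS^{d-1}$ as $\sin^{d-2}\phi\, d\phi\, d\sigma_{d-2}(\hat u)$, integrate out $\hat u$, and change variables $s=\cos\phi$ so that the Jacobian cancels one power of $\sin\phi$ and leaves $(1-s^2)^{(d-3)/2}$, with the normalizing constant evaluated by the substitution $t=s^2$ reducing to the defining integral of $B\bigl(\tfrac{d-1}{2},\tfrac{1}{2}\bigr)$. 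Your computations all check out (including the exponent $\tfrac{d-3}{2}=\tfrac{d-1}{2}-1$ and the symmetry $B(\tfrac12,\tfrac{d-1}{2})=B(\tfrac{d-1}{2},\tfrac12)$). The trade-off: the paper's argument is shorter if one is willing to cite the cap-area formula as a black box, while yours is self-contained, treats the whole range $[-1,1]$ at once, and derives the normalization from first principles rather than inheriting it from the incomplete beta function.
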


\begin{proof}
Since $-1 \leq \bp_1^T\bp_2 \leq 1$, the second equation is obvious. To show the first equation, we calculate the cumulative distribution first. Here we only consider the case when $-1 \leq s <0$, and $0 \leq s <1$ can be shown similarly. Without loss of generality, we fix $\bp_1$ and also assume that $s = \cos\theta$, as shown in Figure~\ref{fig:valid_area}.
%
Since $\cos(\cdot)$ is a monotone function in $[0,\pi]$, $\bp_1^T\bp_2 \leq s$ if and only if $\bp_2$ is located on the blue spherical cap. Since $\bp_2$ is uniformly sampled from the sphere, the probability of $\bp_1^T\bp_2 \leq s$ is equal to the area of the blue spherical cap divided by the area of the whole sphere. The area of the $d-1$ dimensional spherical cap is 
\begin{equation*}
	S = \frac{1}{2}S_0r^{d-1}I_{(2rh-h^2)/r^2}(\frac{d-1}{2},\frac{1}{2}),
\end{equation*} 
where $S_0$ is the area of the whole sphere. $r = 1$ is the radius of the sphere. $h$ is the height of the spherical cap: $h = r + rcos(\theta) = r+sr$. $I_x(a,b)$ is the regularized incomplete beta function. Therefore, the cumulative distribution can be written as,
\begin{equation*}
		P(\bp_1^T\bp_2 \leq s) = \frac{1}{2}I_{1-s^2}(\frac{d-1}{2},\frac{1}{2}), \quad -1 \leq s < 0. 
\end{equation*}
  The probability density is the derivative of the cumulative distribution. 
\end{proof}

Figure \ref{fig:density} shows the probability distribution of the inner product (cosine similarity) of two points independently and uniformly sampled from the unit sphere\footnote{Note that since we assume the descriptors stay on the unit sphere ($\ell_2$ normalized), there is only a sign and constant difference between $\ell_2$ distance and cosine similarity, and the cosine similarity equals to the inner product.}. It shows that with high probability, the two independently and uniformly sampled points are close to orthogonal. %

\begin{figure}[t]
\begin{center}
	\includegraphics[width=0.8\linewidth]{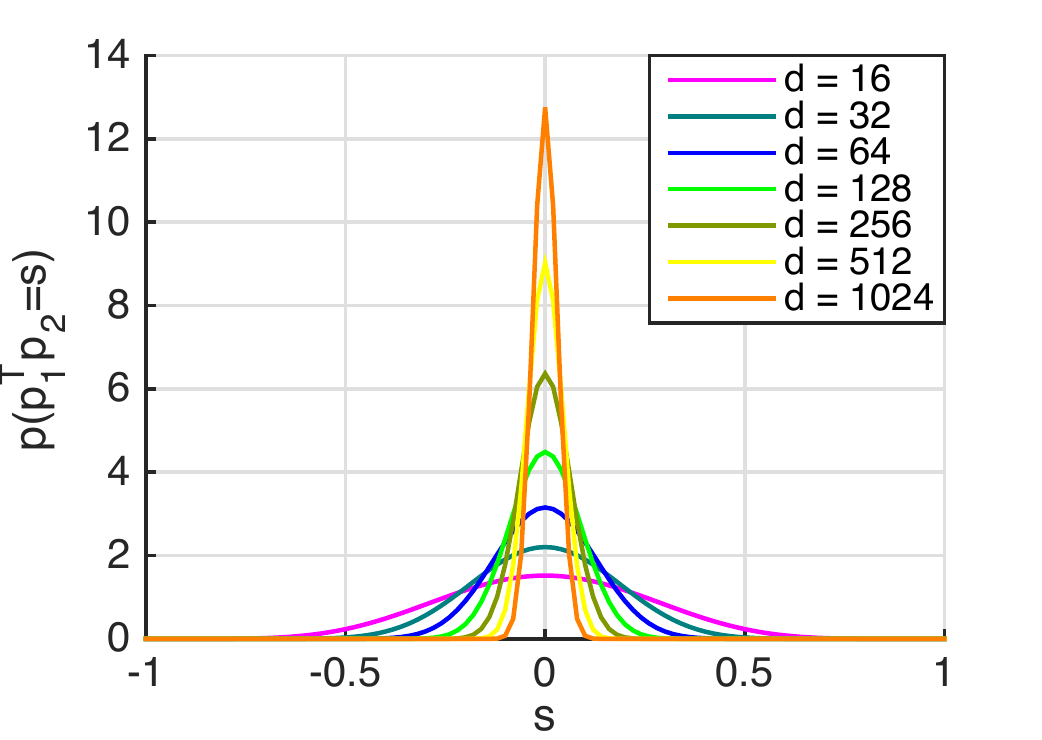}
\end{center}
\caption{Probability density of inner product of two points which are independently and uniformly sampled from the unit sphere in $d$-dimensional space. We can see that, in high dimensional space, most pairs are close to orthogonal.}
\label{fig:density}
\vspace{-0.5em}
\end{figure}

Based on the above observation, one might hope to make the distribution of the learned descriptors matches that of the uniform distribution. This is not practical in two ways. 1) How the learned descriptors distribute depends not only on the learned model, but also on the natural distribution of the image patches (not controllable). 2) It is technically difficult to match two distributions in practice. Instead, in this paper, we propose a regularization technique inspired by the theoretic properties of the uniform distribution on unit sphere. 
The regularization encourages the inner product of two randomly sampled non-matching descriptors matches that of two points independently and uniformly sampled from the unit sphere in its \emph{mean and second moment}.

The following proposition shows that for two points that are independently and uniformly sampled on the unit sphere, the mean and the second moment of their inner product are $0$ and $1 / d$, respectively.
\begin{prop}
	Let $\bp_1, \bp_2 \in \bbS^{d-1}$ be two points independently and uniformly sampled from the unit sphere. The mean and the second moment of $\bp_1^T\bp_2$ are
	\begin{equation*}
		\bbE(\bp_1^T\bp_2) = 0 \;\; \text{and} \;\; \bbE((\bp_1^T\bp_2)^2) = \frac{1}{d}.
	\end{equation*}
	\label{prop:mean_variance}
\end{prop}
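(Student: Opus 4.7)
The plan is to avoid direct integration against the density from Proposition~1 and instead exploit the rotational (orthogonal) invariance of the uniform distribution on $\bbS^{d-1}$. Conditioning on $\bp_1$ and applying a rotation that sends $\bp_1$ to the first standard basis vector $\be_1$, the distribution of $\bp_2$ is unchanged (it is uniform, and uniform is $O(d)$-invariant), so $\bp_1^T\bp_2$ has the same distribution as $\be_1^T\bp_2 = (\bp_2)_1$, the first coordinate of a uniformly distributed point on the sphere. Hence both claims reduce to computing $\bbE[(\bp_2)_1]$ and $\bbE[(\bp_2)_1^2]$.

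For the mean, I would use the antipodal symmetry of the uniform measure: the map $\bp_2\mapsto-\bp_2$ preserves the distribution, so $\bbE[(\bp_2)_1]=\bbE[-(\bp_2)_1]$, which forces $\bbE[(\bp_2)_1]=0$ and therefore $\bbE(\bp_1^T\bp_2)=0$.

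For the second moment, I would invoke coordinate-permutation symmetry: for any permutation matrix $P\in O(d)$, $P\bp_2$ is again uniform on $\bbS^{d-1}$, so $\bbE[(\bp_2)_i^2]$ is the same for every $i\in\{1,\dots,d\}$. Using $\|\bp_2\|_2^2=1$ almost surely and taking expectations,
\begin{equation*}
1=\bbE\!\left[\sum_{i=1}^d (\bp_2)_i^2\right]=\sum_{i=1}^d \bbE[(\bp_2)_i^2]=d\,\bbE[(\bp_2)_1^2],
\end{equation*}
which gives $\bbE[(\bp_2)_1^2]=1/d$, and hence $\bbE((\bp_1^T\bp_2)^2)=1/d$.

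There isn't really a difficult step here: the whole argument runs on the $O(d)$-invariance of the uniform measure on $\bbS^{d-1}$, and the only care required is to cleanly justify conditioning on $\bp_1$ and then aligning it with $\be_1$ by an orthogonal transformation. If a density-based derivation were preferred, one could alternatively substitute the density from Proposition~\ref{prop:probability} and reduce the resulting integrals to the beta function via $u=1-s^2$, but the symmetry proof sidesteps that computation entirely and also highlights why the answer depends on $d$ only through the coordinate-count identity.
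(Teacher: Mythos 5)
Your proposal is correct and follows essentially the same route as the paper: the paper likewise reduces the second moment to $\bbE[(\bp_2)_1^2]$ by fixing $\bp_1 = [1,0,\ldots,0]^T$, then uses coordinate symmetry together with $\sum_i (\bp_2)_i^2 = 1$ to get $1/d$, and handles the mean by symmetry (via $\bbE(\bp_1)=\bbE(\bp_2)=\mathbf{0}$ and independence, where you instead use antipodal symmetry of the first coordinate --- an equivalent observation). Your write-up is if anything slightly more careful in justifying the reduction via $O(d)$-invariance, but there is no substantive difference in approach.
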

\vspace{-2.5em}
\begin{proof}
	Due to symmetry, it's easy to show that $\bbE(\bp_1) = \bbE(\bp_2) = \mathbf{0}$, since $\bp_1$ and $\bp_2$ are independent,
	\begin{equation*}
		\bbE(\bp_1^T\bp_2) = \bbE(\bp_1^T)\bbE(\bp_2) = \mathbf{0}^T\mathbf{0} = 0.
	\end{equation*}
	Since both $\bp_1$ and $\bp_2$ are uniformly sampled from the unit sphere, when considering the second moment of the inner-product, we can fix one point and let the other point to be uniformly sampled. Without loss of generality, we choose $\bp_1 = [1,0,\ldots,0]^T$ and denote $\bp_2$ as $[p_{21},\ldots,p_{2d}]^T$, thus, 
	\begin{equation*}
		(\bp_1^T\bp_2)^2 = p_{21} \quad \text{and} \quad \bbE((\bp_1^T\bp_2)^2) = \bbE(p_{21}^2).
	\end{equation*}
	Due to the symmetry of the sphere, we can have $\bbE(p_{21}^2) = \bbE(p_{22}^2) = \ldots = \bbE(p_{2d}^2)$. Since $\bp_2$ is on the unit sphere, $\sum_{i = 1}^d p_{2i}^2 = 1$. Thus, $\bbE(p_{21}^2) = 1/d$. 
\end{proof}

\subsection{Global orthogonal regularization}
\label{subsec:global_regularization}

\begin{figure*}[h]
\begin{center}
	\includegraphics[width=0.85\linewidth]{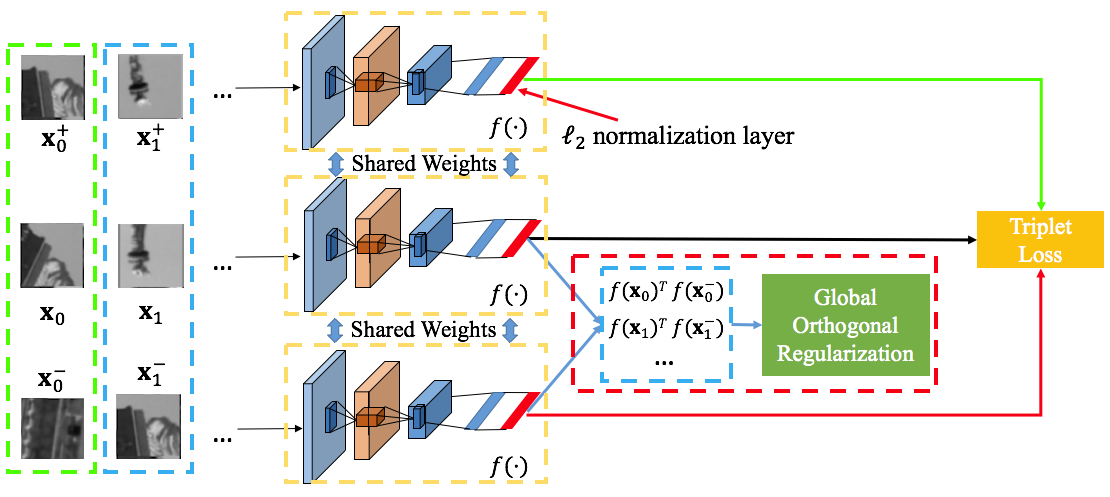}
\end{center}
\caption{Local feature descriptor training pipeline with triplet loss and the proposed global orthogonal regularization (GOR). GOR can also be used with the pairwise loss. In that case, there will be two branches of the network (known as the Siamese network) instead of three.}
\label{fig:training}
\vspace{-0.5em}
\end{figure*}

We propose a regularization which tries to match the mean and second moment shown in Proposition \ref{prop:mean_variance}. It encourages that the descriptors of random sampled non-matching pairs have similar statistical property as two points independently and uniformly sampled from the unit sphere.
We call this regularization Global Orthogonal Regularization (GOR). 
Following the notations in Section~\ref{sec:back}, given a set of $N$ random sampled non-matching patches $\{(\bx_i,\bx^-_i)\}_{i = 1}^{N}$, denote the descriptor function as $f(\cdot)$. The sample mean of the inner product of the descriptors of non-matching pairs is,
\begin{equation}
	M_1 = \frac{1}{N}\sum^N_{i = 1}f(\bx_i)^Tf(\bx^-_i).
\label{eq:empirical_mean}
\end{equation}
The sample second moment of the inner product is, 
\begin{equation}
\setlength\abovedisplayskip{5pt}
\setlength\belowdisplayskip{5pt}
	M_2({f(\bx)^Tf(\bx^-)}) = \frac{1}{N}\sum^N_{i = 1}(f(\bx_i)^Tf(\bx^-_i))^2.
\label{eq:empirical_variance}
\end{equation}
The Global Orthogonal Regularization (GOR) is defined as
\begin{equation}
\setlength\abovedisplayskip{5pt}
\setlength\belowdisplayskip{5pt}
	\ell_{\text{gor}} = M_1^2 + \max(0,M_2-\frac{1}{d}),
\label{eq:orthogonal_regularization}
\end{equation}
where $d$ is the dimension of the final output descriptor.


%
In (\ref{eq:orthogonal_regularization}), the first term tries to match the mean of the distributions and the second term tries to make the second moment close to $1/d$. 
In order to calculate the regularization term, one needs to consider all the non-matching pairs in the training set -- this is impractical. In practice we use a sampled batch to estimate its value.
The reason for using the hinge loss for the second term is that, in many batches, all the non-matching pairs are already very close to being orthogonal ($M_2 < 1/d$), and there is no need to force $M_2$ to be $1/d$. We have tried other loss functions for the second term. $\ell_1$ loss results in a similar performance, while $\ell_2$ leads to slight degradation.

The proposed regularization term can be used with any loss function. Denote the original training loss as $\ell_{(\cdot)}$, the final loss can be written as,
\begin{equation}
\setlength\abovedisplayskip{3pt}
\setlength\belowdisplayskip{3pt}
	\ell_{(\cdot)\_\text{gor}} = \ell_{(\cdot)} + \alpha \ell_{\text{gor}},
\label{eq:triplet_gor}
\end{equation}
where $\alpha$ is a tunable parameter.
In the experiment section, we test combining the global orthogonal regularization with contractive loss\footnote{For contractive loss (\ref{eq:pairwise}), we substitute the second term in (\ref{eq:pairwise}) with the proposed regularization, since both terms try to separate the non-matching pairs.} (\ref{eq:pairwise})~\cite{simo-serra_discriminative_2015}, triplet loss (\ref{eq:triplet})~\cite{balntas_learning_2016}, lifted structured similarity softmax loss (LSSS)~\cite{oh2016deep} and N-pair loss~\cite{sohn_improved_2016}. 

\subsection{Non-Euclidean distance}
\label{subsec:related}
So far, we assumed that the distance of the descriptors is based on the Euclidean distance.
There are several recent works that use a decision network instead of the Euclidean distance to calculate the similarity. 
Han \etal~\cite{han_matchnet:_2015} propose to use a Siamese network followed by a decision net. 
Zagoruyko and Komodakis~\cite{zagoruyko_learning_2015} develop a 2-stream networks, in which one stream focuses on the central area of the patch and the other focuses on the surrounding area of the patch. 
Kumar \etal \cite{g_learning_2015} propose a global loss and combine it with the 2-stream networks to achieve the state-of-the-art performance. The drawback of using a new type of distance rather than Euclidean distance is that efficient large-scale nearest neighbor search method such as locality sensitive hashing (LSH) \cite{pauleve2010locality} can no longer be used. In this paper, we focus on training local descriptor in the Euclidean space.

\section{Implementation}
In this section, we show our training framework based on triplet loss and the proposed global orthogonal regularization. The framework has three branches (as shown in Figure~\ref{fig:training}). The proposed global orthogonal regularization only considers two branches which process the non-matching pairs. 
Training pipeline of other losses can be achieved accordingly. For example, for the pairwise loss, we use a network with two branches (known as the Siamese network) instead of three.

Though our method is flexible in terms of the patch sizes, here we follow \cite{han_matchnet:_2015} to use patch size $64\times64$. Each branch in the triplet/Siamese network has the following structure: \{Conv(7,7,32) - MaxP(2,2) - Conv(6,6,64) - MaxP(2,2) - Conv(5,5,128) - MaxP(2,2) - FC(128) - $\ell_2$ Norm\}. 
Conv($n,m,c$) means convolutional layer with kernel size $(n,m)$ and output channel number $c$. MaxP($n,m$) is a max pooling layer with size $n \times n$ and stride $m$. FC($d$) is a fully connected layer with output dimension $d$. $\ell_2$ Norm is $\ell_2$ normalization layer to guarantee each descriptor has unit norm. 
All the convolution layers are followed by batch normalization~\cite{ioffe_batch_2015} and ReLU.  
Based on our implementation, when trained without the proposed global regularization, the above network structure achieves similar performance as the one proposed in \cite{balntas_learning_2016}. 
The motivation of the use of the above shallow network is for efficiency and avoiding overfitting~\cite{balntas_learning_2016,simo-serra_discriminative_2015}. We show the experiment of the proposed method over the large-scale patch descriptor benchmark in the next section. 

\begin{table*}[t]
\begin{center}
\begin{small}
\begin{tabular}{|c|c|c|cccccc|c|}
\hline
\multirow{3}{0.8cm}{\centering Loss Type}  & Training & & NotreDame & Liberty & NotreDame & Yosemite & Yosemite &Liberty &\multirow{3}{*}{Mean} \\ 
           & Test & &\multicolumn{2}{c}{Yosemite}  & \multicolumn{2}{c}{Liberty}& \multicolumn{2}{c|}{NotreDame} & \\ \cline{2-9}
           & Descriptor & Dim &\multicolumn{6}{c|}{ } & \\ \hline \hline
\multirow{2}{1.2cm}{\centering N/A} & SIFT~\cite{lowe_distinctive_2004} &128 & \multicolumn{2}{c}{27.29} & \multicolumn{2}{c}{29.84} & \multicolumn{2}{c|}{22.53} &26.55 \\
& VGG-Opt~\cite{simonyan_learning_2014} & 80 & 10.08 & 11.63& 11.42 & 14.58 & 7.22 & 6.17  &10.28 \\
\cline{1-2}
\multirow{4}{1.2cm}{\centering Pairwise} &DeepCompare$_{siam}$\cite{zagoruyko_learning_2015} & 256 & 15.89 & 19.91& 13.24 & 17.25 & 8.38 & 6.01  &13.45 \\
&DeepCompare$_{2str}$\cite{zagoruyko_learning_2015} & 512 & 13.02 & 13.24& 8.79 & 12.84 & 5.58 & 4.54  &9.67 \\
&DeepDesc\cite{simo-serra_discriminative_2015} & 128 & \multicolumn{2}{c}{16.19} & \multicolumn{2}{c}{8.82} & \multicolumn{2}{c|}{4.54} &9.85 \\
&\textbf{CL+GOR (Ours)} & 128 & 6.88 & 6.99 & 6.46 & 8.33 & 3.73 & 3.40 & 5.97 \\
\cline{1-2}
Global&TGLoss\cite{g_learning_2015} & 256 & 9.47 & 10.65& 9.91 & 13.45 & 5.43 & 3.91  &8.80 \\
\cline{1-2}
\multirow{4}{1.2cm}{\centering Triplet}&TFeat\cite{balntas_learning_2016} & 128 & 7.95 & 8.10& 7.64 & 9.88 & 3.83 & 3.39  &6.79 \\
&TFeat+AS\cite{balntas_learning_2016} & 128 & 7.08 & 7.82& 7.22 & 9.79 & 3.85 & 3.12  &6.47\\
&\textbf{TL+GOR (Ours)} & 128 & \textbf{4.94} & 5.74 & 5.47 & 7.13 & 2.58 & 2.28  &4.69 \\
&\textbf{TL+AS+GOR (Ours)} & 128 & 5.15 & \textbf{5.40}& \textbf{4.80} & \textbf{6.45} & \textbf{2.38} & \textbf{1.95}  &\textbf{4.36} \\

\cline{1-2}
\multirow{2}{1.2cm}{\centering Structured} & N-pair\cite{sohn_improved_2016} & 128 & 5.53 & 8.29 & \textbf{4.80} & 7.51 & 3.01 & 2.60  & 5.29 \\
&\textbf{N-pair+GOR (Ours)} & 128 & 5.16 & 7.43 & 5.03 & 7.10 & 2.81 & 2.34  &4.98 \\
 
\hline
\end{tabular}%
\end{small}
\end{center}
\caption{FPR95 (\%) of different methods on UBC patch dataset.  TL+AS+GOR achieves the lowest FPR95 rate.}
\label{table:FPR95}
\vspace{-0.5em}
\end{table*}

\section{Local Descriptor Result}
\label{sec:local}
\subsection{Dataset}
We first conduct experiments on the standard local patch descriptor benchmark, UBC patch dataset~\cite{brown_discriminative_2011}. The dataset contains three subsets, Yosemite, Notre Dame and Liberty. Each subset consists of more than 100k classes which include different image patches corresponding to the same 3D location obtained through a 3D reconstruction from different multi-view images. 
The total number of local image patches within each subset is more than 450k.
Each patch has a size of $64\times64$ and is sampled around the output of difference of Gaussian~(DOG)~\cite{lowe_distinctive_2004} detector. The scale and  orientation of the patch is normalized by the detector. Though with normalized scale and orientation, the patch dataset still contains great variations in view points, lighting, camera conditions \etc
We follow the evaluation protocol proposed in \cite{brown_discriminative_2011} to separate the whole dataset into six training-test combinations in which one subset is for training and the other for test. 

The metric used to evaluate different methods is false positive rate at 95\% true positive rate (FPR95), which is the standard metric in previous works~\cite{balntas_learning_2016,simo-serra_discriminative_2015,zagoruyko_learning_2015}. 
The test split of each subset contains 100k patch pairs in which 50\% are matching pairs and the other 50\% are non-matching pairs. The test pairs are predefined in \cite{brown_discriminative_2011}.

\subsection{Training setting and evaluation method}
For training, we randomly sample 1M triplets (for triplet network), or 1M matching pairs and 1M non-matching pairs (for Siamese network), for each training subset. 
No data augmentation or specially designed sampling is used. 
The training batch size is set to 128. We use SGD with momentum in the optimization. The learning rate starts at 0.1, with momentum 0.9. The learning rate is reduced after each epoch by a factor of 0.96. 
The trade-off parameter $\alpha$ in (\ref{eq:triplet_gor}) is set to 1 (we discuss the choice of $\alpha$ in Section \ref{subsec:local_result}). The loss function in (\ref{eq:orthogonal_regularization}) is hinge loss. The margin for matching pair in Siamese network ($\epsilon^+$ in (\ref{eq:pairwise})) is set to 0.7 and the margin of triplet network ($\epsilon$ in (\ref{eq:triplet})) is set to 0.5. All are estimated via empirical cross validation. 
Our implementation is based on TensorFlow~\cite{tensorflow2015-whitepaper}. The training of each epoch takes about 10 minutes on a Titan X GPU. All the networks are trained with 20 epochs, and they all converge before the end of training.

We compare our method with a large set of local feature descriptors which use Euclidean distance as similarity metric. 
The methods include: 1) hand-crafted descriptor (SIFT~\cite{lowe_distinctive_2004}), conventional machine learning based descriptor (VGG-Opt~\cite{simonyan_learning_2014}), 
2) deep learning based descriptors learned with pairwise loss (DeepCampare$_{siam}$~\cite{zagoruyko_learning_2015}, DeepCampare$_{2str}$\footnote{Subscript ``$2str$'' means central-surround network proposed in \cite{zagoruyko_learning_2015}.}~\cite{zagoruyko_learning_2015} and DeepDesc~\cite{simo-serra_discriminative_2015}), 
3) descriptors learned with triplet loss with and without anchor swap (TFeat+AS~\cite{balntas_learning_2016} and TFeat~\cite{balntas_learning_2016}),
4) descriptors learned with global loss (TGLoss~\cite{g_learning_2015}), 
and 
5) \xuworries{descriptors learned with structured loss (N-pair~\cite{sohn_improved_2016})}.

We combine the proposed Global Orthogonal Regularization with four commonly used losses mentioned in Section~\ref{sec:back}, namely, contractive loss, triplet loss, triplet loss with anchor swap and N-pair loss~\cite{sohn_improved_2016}.
Thus four variants of our method are used in evaluation: contractive loss with global orthogonal regularization (CL+GOR), triplet loss  with GOR (TL+GOR), triplet loss with anchor swap \cite{balntas_learning_2016} and GOR (TL+AS+GOR) and N-pair loss with GOR (N-pair+GOR).

\begin{figure}[t]
\begin{center}
\subfigure[Train: Notre Dame, $\qquad$ $\qquad$ Test: Liberty.]{\label{fig:notradame_liberty_roc}
	\includegraphics[width=0.48\linewidth]{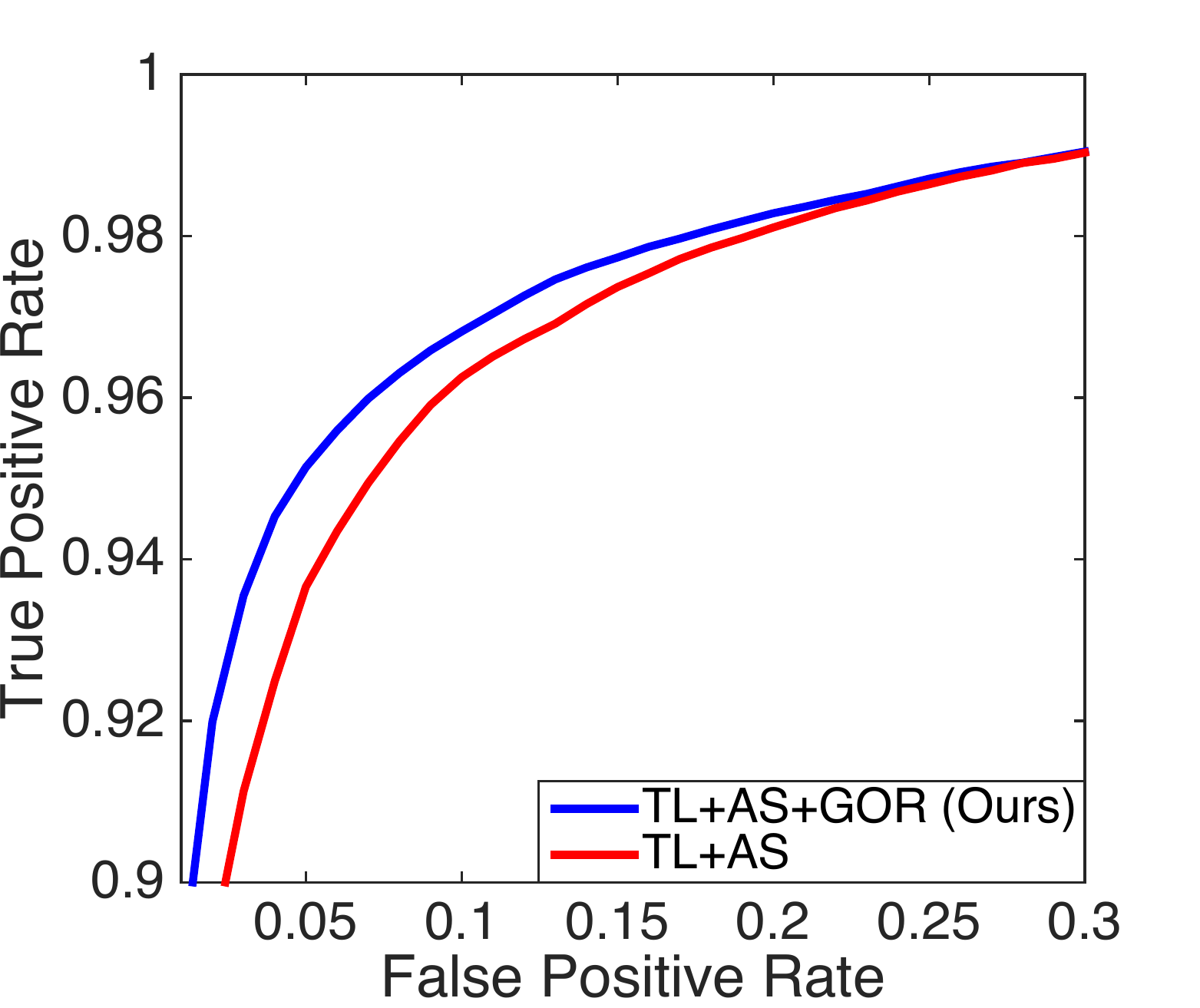}}
\subfigure[Train: Notre Dame, $\qquad$ $\qquad$ Test: Yosemite.]{\label{fig:notradame_yosemite_roc}
	\includegraphics[width=0.48\linewidth]{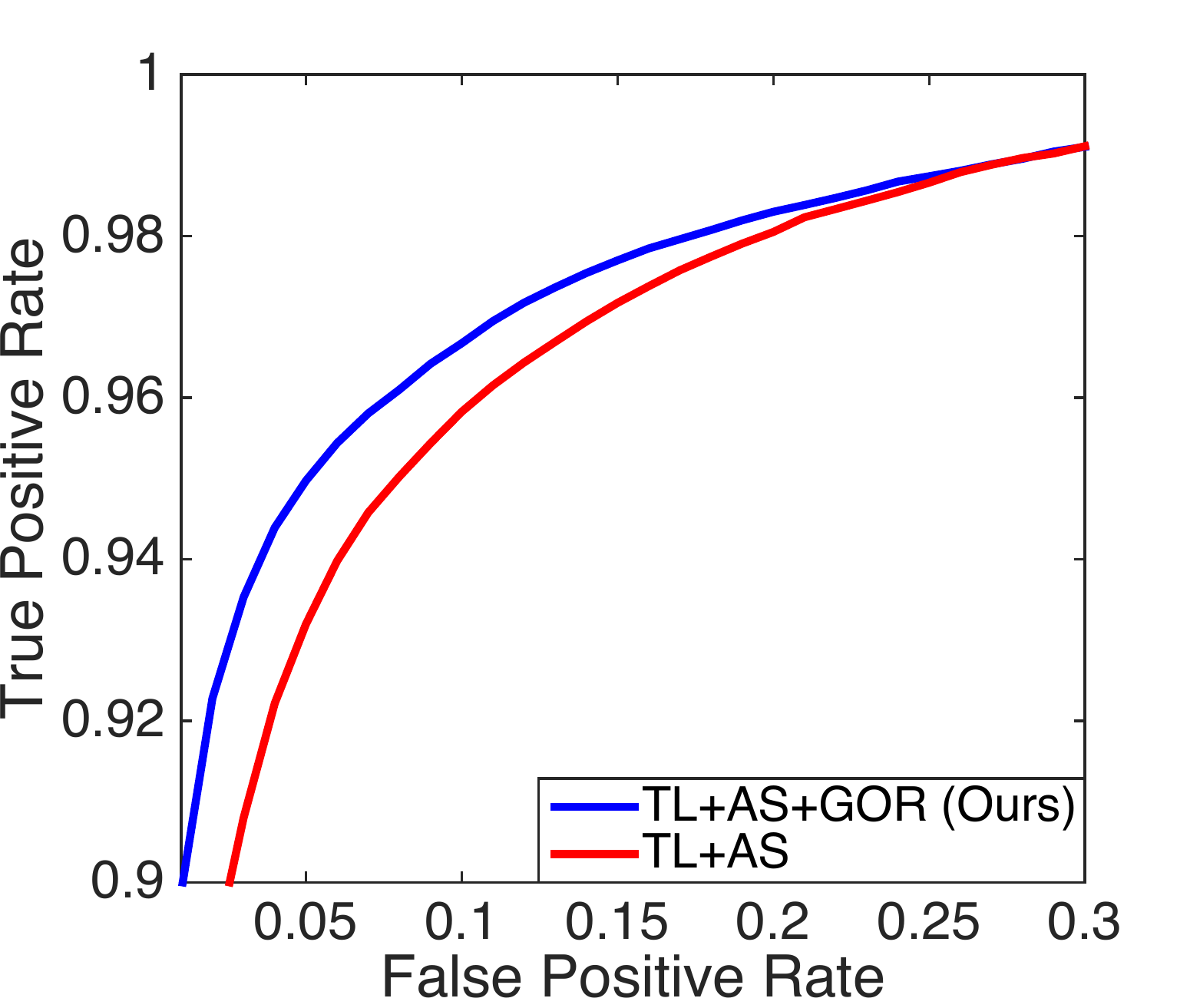}}
\end{center}
\caption{ROC curves for our method and baseline method trained on the Notre Dame subset and tested on the Liberty and Yosemite subsets.}
\label{fig:ROC}
\vspace{-1.0em}
\end{figure}

\subsection{Patch pair classification result}
\label{subsec:local_result}
\noindent\textbf{Classification error.} Table~\ref{table:FPR95} summarizes the performance of all the evaluated  Euclidean embedding methods on UBC patch dataset. We show FPR95 on each of the six training-test combinations and also the mean over all of them.

By simply applying the proposed global orthogonal regularization, almost all the baseline methods show performance gains. Specifically, among all the pairwise loss based methods, contractive loss with the proposed GOR~(CL+GOR) reduces the error of the previous best pairwise loss model~(DeepCampare$_{2str}$) from 9.67 to 5.97 with a relative deduction of 38.3\%.
Among all the triplet loss based methods, triplet loss with the proposed GOR (TL+GOR) reduces the error of its triplet loss baseline (TFeat) from 6.79 to 4.69. For the anchor swap version (TL+AS+GOR vs. TFeat+AS), the error reduces from 6.47 to 4.36. The relative deductions are 30.9\% and 32.6\%, respectively. \xuworries{For the structured loss, the error was reduced from 5.29 to 4.98. The improvement is not as significant because the N-pair loss already has the ability to force the random non-matching pairs to be orthogonal. The second moment of the non-matching pairs trained with N-pair loss is close to $2/d$, while that of the triplet loss is close to $50/d$. Overall, TL+AS+GOR achieves the lowest FPR95 rate.}

\xuworries{The improvement of our method can also be shown using other metrics such as the ROC curves. The ROC curves of TL+AS+GOR and TL+AS both are shown in Figure~\ref{fig:ROC}. Here, the training subset is Notre Dame, and the test subsets are Liberty (Figure~\ref{fig:notradame_liberty_roc}) and Yosemite (Figure~\ref{fig:notradame_yosemite_roc}. The result shows that the performance gain of the proposed regularization is universal at different false positive rates.}

\begin{figure}[t]
\subfigure[TL+AS (Baseline)]{\label{fig:histogram-baseline}
	\includegraphics[width=0.46\linewidth]{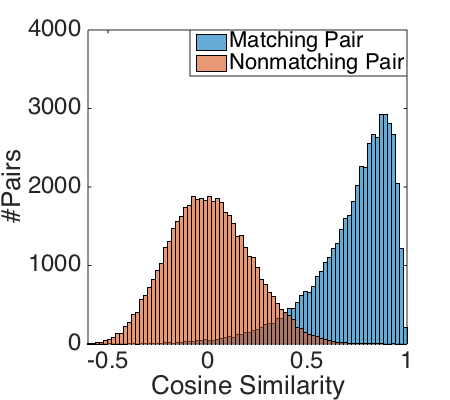}}
\subfigure[TL+AS+GOR (Ours)]{\label{fig:histogram-ours}
	\includegraphics[width=0.46\linewidth]{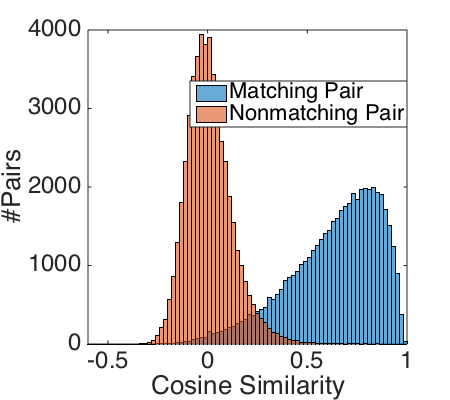}}
\hspace{1em}
\caption{Histogram of cosine similarity of matching pairs and non-matching pairs on ``Liberty''.  The model is trained on ``Notre Dame''. When trained with GOR, the non-matching pairs are more close to being orthogonal.}
\label{fig:histogram}
\end{figure}

\begin{figure}[t]
\begin{center}
\subfigure[FPR95(\%) with different values of $\alpha$ (\#dimension= 128).]{\label{fig:alpha}
	\includegraphics[width=0.45\linewidth]{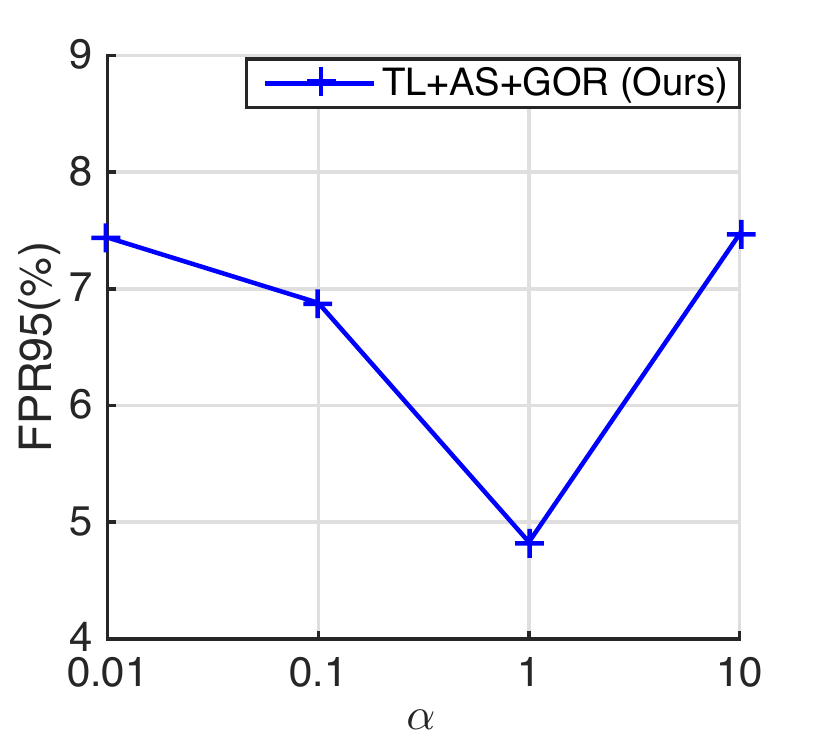}}
\subfigure[FPR95(\%) with different feature dimensions ($\alpha= 1$).]{\label{fig:embedding_num}
	\includegraphics[width=0.51\linewidth]{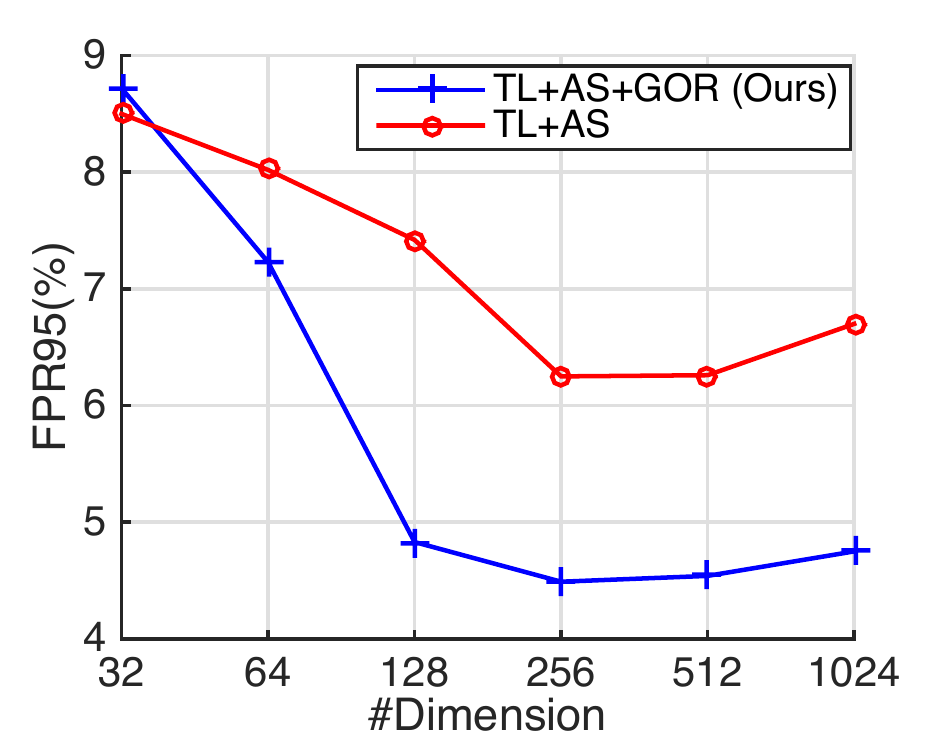}}
\end{center}
\caption{FPR95(\%) with different $\alpha$ and embedding dimensions. 
$\alpha$ trades off the regularization term and the triplet loss. Training set: Notre Dame, Test set: Liberty.}
\label{fig:sensitivity}
\vspace{-1.0em}
\end{figure}


\begin{figure*}[t]
	\begin{center}
	\subfigure[F1]{\label{fig:f1}
	\includegraphics[width=0.325\linewidth]{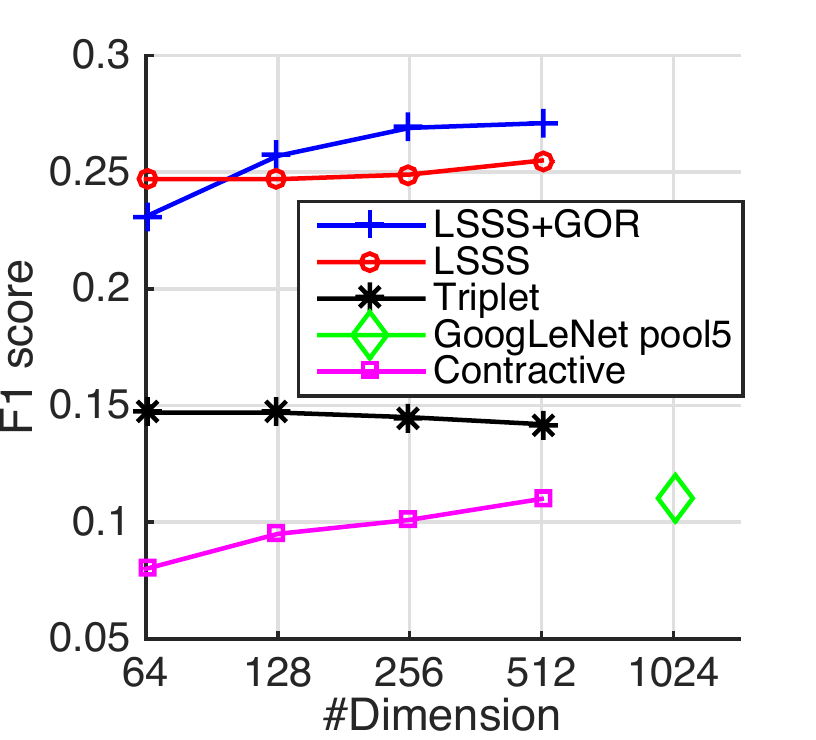}}
	\subfigure[NMI]{\label{fig:nmi}
	\includegraphics[width=0.32\linewidth]{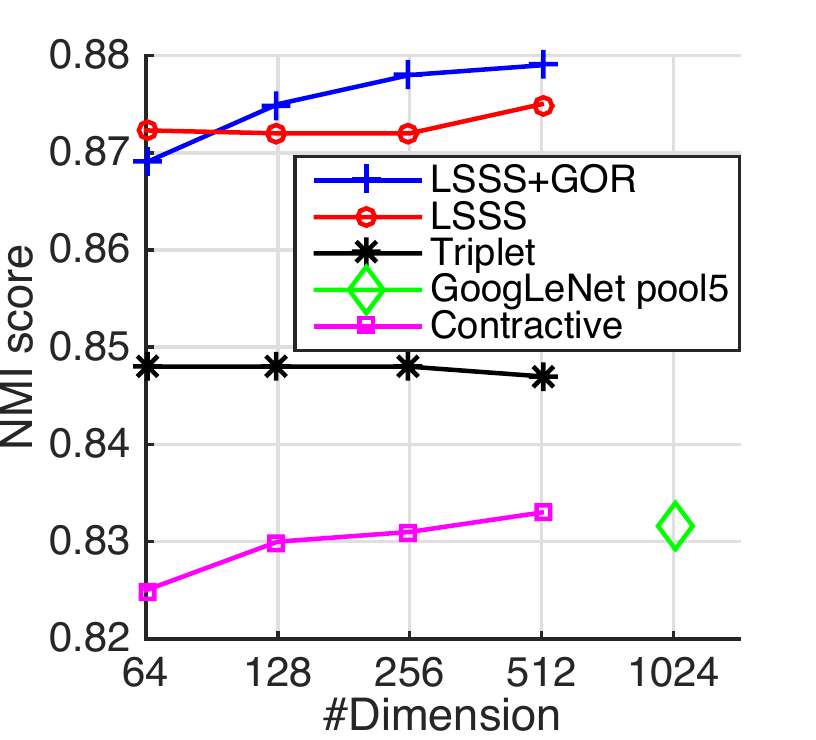}}
	\subfigure[Recall@K]{\label{fig:recall}
	\includegraphics[width=0.32\linewidth]{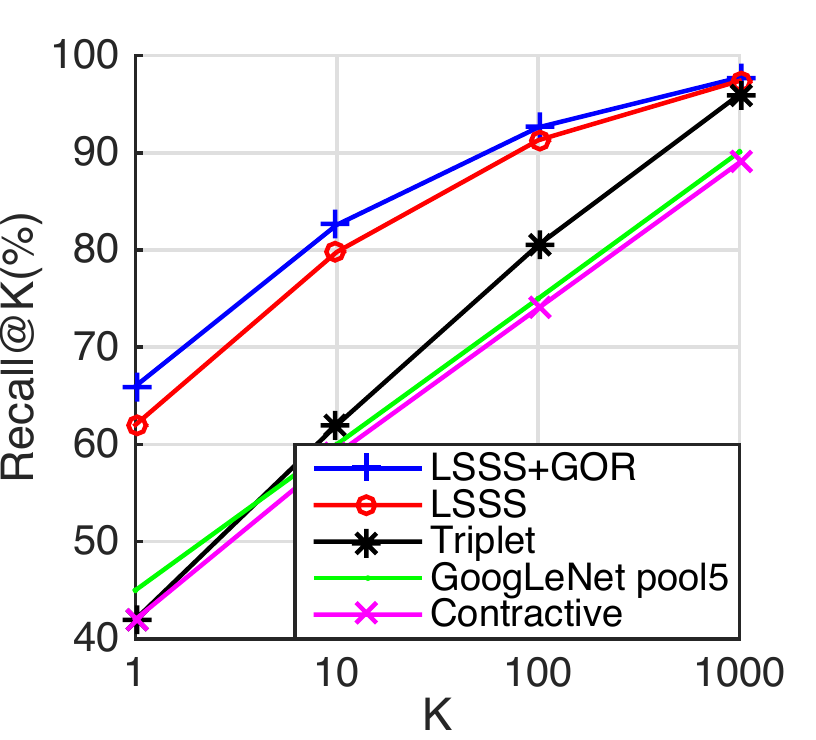}}
	\end{center}
	\caption{F1, NMI (for clustering) and Recall@K (for retrieval) scores for image-level descriptor learning using Stanford Online Product dataset.}
	\vspace{-0.5em}
\end{figure*}

\noindent\textbf{Similarity histogram.} 
To understand how the proposed GOR affects the distribution of the similarity, the histograms of cosine similarity of the matching pairs and non-matching pairs of the models trained with/without the proposed GOR on test set are shown in Figure~\ref{fig:histogram}. We use the same baseline method defined above. 
This figure shows the setting in which the training subset is Notre Dame and the test subset is Liberty, but the observation is also general for other training/test combinations.   

The histogram of the similarity of the matching and non-matching pairs of the baseline method is shown in Figure~\ref{fig:histogram-baseline}, while those of model trained with the proposed regularization is shown in Figure~\ref{fig:histogram-ours}. The histogram in blue is for matching pairs, while the histogram in orange is for non-matching pairs. The histogram of the similarity of non-matching pairs trained with GOR has a much sharper shape than that without the proposed regularization, which means when trained with GOR, non-matching pairs are more likely to be close to orthogonal. 
\xuworries{With the proposed GOR, the empirical error (overlapped area in Figure~\ref{fig:histogram-ours}) decreases by 15\% relatively in comparison with the baseline (Figure~\ref{fig:histogram-baseline}).}

\vspace{0.3em}
\noindent\textbf{Trade-off parameter.} $\alpha$ in (\ref{eq:triplet_gor}) controls the trade-off between the triplet loss and GOR. We use Notre Dame as training set and Liberty as test set and show the FPR95 of different models trained with different $\alpha$ values (from 0.01 to 10) in Figure~\ref{fig:alpha}. When $\alpha = 0$, (\ref{eq:triplet_gor}) becomes standard triplet loss. When $\alpha$ is large, the network will enforce the descriptor of all the non-matching pairs (including ``hard negatives'') to be close to orthogonal. 

\vspace{0.3em}


\vspace{0.3em}
\noindent\textbf{Embedding dimension.} We investigate how the proposed GOR affects the training of descriptors of different dimensionalities. We change the output node number in final fully-connected layer from $[32, 64, 128, 256, 512, 1024]$. The result is shown in Figure~\ref{fig:embedding_num}. 
The proposed GOR achieves significant performance gain when training a high-dimensional descriptor ($d \geq 64$). 
The low dimensional case ($d = 32$) does not work as well. One possible reason is that the descriptors of two non-matching patches are harder to be spread-out, and forcing non-matching patches to be orthogonal may lead to error. 
Finally, both our method and the baseline degrade for very high dimensions. We conjecture this is due to over-fitting. 
\xuworries{One may think that when $d$ is large, the network may not be able to force the second moment to a very small $1/d$. However, the proposed GOR is only a regularization not a hard constraint. And we can always make a trade-off by changing the value of $\alpha$ in (\ref{eq:triplet_gor}).}

\subsection{Descriptor extraction efficiency}
Since there are hundreds of patches in one image, the speed of descriptor extraction is also very important. The proposed GOR only affects the training stage, adding no additional cost in extraction pipeline. Based on our implementation on TensorFlow, when running a Titan X GPU, the extraction speed is about 10K patches per second, which is comparable to the conventional local descriptor extraction method like SIFT~\cite{lowe_distinctive_2004} and descriptor learning techniques using ``shallow'' structure such as TFeat and DeepDesc. 

\section{Extension to image-level embedding}
\label{sec:image}
Although GOR is proposed to learn local descriptors, the method can also be used in other applications where a feature embedding is learned. 
As an example, we show that it can also be used to improve the performance of image-level embedding. We compare our method to LSSS~\cite{oh2016deep}, which, as reviewed in Section \ref{subsec:improve}, outperforms triplet and pairwise losses.

\subsection{Dataset and evaluation metric}
The image level feature embedding experiment is conducted on Stanford Online Products dataset~\cite{oh2016deep}. Stanford Online Products dataset contains 120,053 product images crawled from eBay.com. There are a total of 22,634 products belonging to 12 categories. 
Each product is an individual class and has an average of 5.3 images. We strictly follow the same experiment setting in~\cite{oh2016deep}, that using 11,318 classes with a total of 59,551 images for training and another 11,316 classes with 60,502 images for test. The training and test splits have no overlap and are predefined in the dataset. We choose this dataset due to its realistic setting and rich variations within classes. 

As in \cite{oh2016deep}, we perform both clustering and retrieval tasks. For the clustering task, the F1 and NMI scores are used as the evaluation metrics~\cite{manning2008introduction}. F1 metric computes the harmonic mean of precision and recall. NMI metric equals to the mutual information divided by the average value of the entropy of clusters and the entropy of labels.
For retreival task, the performance is evaluated by Recall@K score as in \cite{oh2016deep}. For each query image, we first remove the query from the test set and then retrieve its K nearest neighbors from the test set. 
The recall of the test image is set to 1 if \emph{any} image in the same class with the query is retrieved and 0 otherwise. 

\subsection{Implementation details}
The proposed GOR is embedded with the lifted structured similarity softmax loss (LSSS), which is one of the best performing losses used in learning feature embedding. 
The network structure follows GoogLeNet~\cite{szegedy2015going} up to the ``pool5'' layer. The final descriptor is generated by a fully connected layer. All the convolutional layers are initialized from the network pre-trained on ImageNet ILSVRC dataset~\cite{russakovsky2015imagenet}.
All convolutional layers are fine-tuned with a learning rate that is 10 times smaller than that of the fully-connected layer. The batch size is set to 128 and the training iteration is set to 20,000. 

%
%
%

\subsection{Result}
\label{subsec:image_feature_result}
%


Figure~\ref{fig:f1} and Figure~\ref{fig:nmi} further show the F1 score and NMI score for the clustering task with different embedding sizes. 
By combining the proposed GOR with LSSS, our method shows better performance especially in high-dimensional cases ($d \geq 128$). The reason is discussed in Section \ref{subsec:local_result}. Figure~\ref{fig:recall} shows the Recall@K score for 512 dimensional descriptor. 

\xuworries{We also test the proposed regularization on small metric learning datasets such as Car196 and CUB-200-2011. The proposed regularization does not show clear improvement. One possible explanation is that the numbers of the classes in Car196~(196) and CUB-200-2011~(200) are much smaller than that of UBC~($>$100k) and Stanford online dataset~($>$22k). The assumption of uniform distribution for non-matching samples is not ideal for such situations. To understand this, one can imagine an extreme case of only two classes in a high dimensional space, putting them on opposite positions of the unit sphere (instead of orthogonal) is optimal.} 

\section{Conclusion}
\xuworries{We proposed a regularization technique named Global Orthogonal Regularization~(GOR) that makes the local feature descriptor more spread-out in the descriptor space. 
Inspired by the properties of uniform distribution, the regularization achieves the desired property by making the non-matching pairs close to orthogonal. We showed the  proposed regularization can be easily used to improve the performance of various feature embedding losses such as the pairwise and triplet losses.}

In the future, we plan to extend the proposed regularization technique to non-Euclidean distance. We also plan to apply our method to more general metric learning settings.
Our prototype implementation can be downloaded from \nolinkurl{https://github.com/ColumbiaDVMM/Spread-out_Local_Feature_Descriptor}.

\vspace{1em}

\noindent\textbf{Acknowledgement}
This material is based upon work supported by the United States Air Force Research Laboratory (AFRL) and the Defense Advanced Research Projects Agency (DARPA) under Contract No.~FA8750-16-C-0166. Any opinions, findings and conclusions or recommendations expressed in this material are solely the responsibility of the authors and does not necessarily represent the official views of AFRL, DARPA, or the U.S. Government.

\begin{small}
\bibliographystyle{ieee}
\bibliography{egbib}
\end{small}
\end{document}